\newif\ifdraft
\newcommand{\amirg}[1]{\textcolor{red}{AG: #1}}
\newcommand{\amiw}[1]{\textcolor{blue}{AW: #1}}
\newcommand{\gy}[1]{{\color{blue}[{\bf GY:}~#1]}}
\newcommand{\gale}[1]{{\color{magenta}[{\bf GE:}~#1]}}
\newcommand{\shay}[1]{{\color{blue}[{\bf Shay:}~#1]}}
\newcommand{\amirg}[1]{} 
\newcommand{\amiw}[1]{} 
\newcommand{\gy}[1]{} 
\newcommand{\gale}[1]{} 
\newcommand{\shay}[1]{}
\title{Active Learning with Label Comparisons}
\author[1,2]{Gal Yona}
\author[1,3]{Shay Moran}
\author[1,4]{Gal Elidan}
\author[1,5]{\href{mailto:<amir.globerson@gmail.com>?Subject=Your UAI 2022 paper}{Amir Globerson}{}}
\affil[1]{%
    Google
}
\affil[2]{%
    Weizmann Institute of Science
}
\affil[3]{%
    Technion
}
\affil[4]{%
    Hebrew University
  }
\affil[5]{%
    Tel Aviv University
}
\begin{document}
\maketitle

\begin{abstract}
Supervised learning typically relies on manual annotation of the true labels. When there are many potential classes, searching for the best one can be prohibitive for a human annotator. On the other hand, comparing two candidate labels is often much easier. We focus on this type of pairwise supervision and ask how it can be used effectively in learning, and in particular in active learning. We obtain several insightful results in this context. In principle, finding the best of $k$ labels can be done with $k-1$ active queries. We show that there is a natural class where this approach is sub-optimal, and that there is a more comparison-efficient active learning scheme. A key element in our analysis is the ``label neighborhood graph'' of the true distribution, which has an edge between two classes if they share a decision boundary. We also show that in the PAC setting, pairwise comparisons cannot provide improved sample complexity in the worst case. We complement our theoretical results with experiments, clearly demonstrating the effect of the neighborhood graph on sample complexity.  


\end{abstract}

\section{Introduction}\label{sec:intro}
Supervised learning is a central paradigm in the empirical success of machine learning in general, and deep learning in particular. Despite the recent advances in unsupervised learning, and in particular self-training, large amounts of annotated data are still required in order to achieve high accuracy in many tasks. The main difficulty with supervised learning is, of course, the manual effort needed for annotating examples. Annotation becomes particularly challenging when there are many classes to consider. For example, in a text summarization task, we can ask an annotator to write a summary of the source text, but this will likely not result in the ``best'' summary. We could also present the annotator a summary and ask for feedback (e.g. is it good), but the quality could be difficult to judge in isolation. We could also ask the annotator to select the best summary out of a set of candidates (e.g. produced by a language model), but this could be taxing if not infeasible when there are many candidates.


Motivated by the above scenario, previous works \citep[e.g., see][for a recent application to large language models]{stiennon2020learning,ouyang2022training} have considered an alternative, and arguably natural, form of supervision: ``Label Comparisons''. Instead of presenting many potential labels to the annotator (e.g., candidate text summaries), we only present two candidates and ask the annotator to choose the better one. For example, when summarizing Snow White, we can ask to compare
the summaries ``A story about an evil step-mother'' and ``A story about a girl who is driven to the forest by an evil step-mother and ends up living with dwarves''. Most annotators would easily choose the latter as a better summary. 


Label comparisons clearly require a much lighter cognitive load than considering all alternatives, and thus have high potential as an annotation mechanism. However, our theoretical understanding of this mechanism is fairly limited. While there has been work on learning to rank, which also uses comparisons, the goal of label comparisons is typically not to learn a complete ranking, but rather to build a model that outputs optimal predictions. Here we set out to analyze label comparisons from this perspective, and we obtain several surprising results and a new algorithm.



Our key question is what is the best way to learn with label comparisons. We assume 
that during learning we can only ask an annotator for label comparisons and not, for example, for the ground-truth label of the input, which we refer to as an argmax query. We then ask how one can design algorithms that make effective use of such queries, and what is the corresponding query complexity. Namely, how many queries are needed to achieve a given test error. Perhaps the most natural way of using comparisons is simply for finding the argmax label, which can be done via $k-1$ active queries. However, as we shall see, this is a suboptimal approach.


The first question we ask is whether access to comparisons is more informative than access to the argmax. If we know all ${k \choose 2}$ comparisons for $x$, we can also infer the argmax and so it would seem like the answer to this should be in the affirmative.
Our first result shows that in the PAC setting, this is in fact not the case, and that knowing all comparisons may result in the same sample complexity as knowing only the argmax. The intuition for this negative result is that for 1D classifiers, the informative points are those that lie close to the decision boundaries between classes, and the argmax label for these points can also be used to find the boundaries, so that comparisons do not provide further advantage.
\gy{Changed decision boundary to decision boundaries (plural), so that it's clear we are talking about multiclass.}
\shay{It might be fair to note that this negative result really exploits the worst-case nature of PAC learning, and that it is certainly plausible that there are natural/practical learning tasks where comparisons provide insightful information. }

The negative result above may seem to suggest that comparisons are only useful for inferring the argmax. However, we show that in the case of active learning, comparisons can be used more effectively. We consider the setting where the active learner
can choose which label comparison queries to request for a
given input $x$ (including not requesting any queries at all). A natural approach here is to take a ``standard'' active learning algorithm based on argmax queries, and implement it using pairwise comparisons, by using $k-1$ active comparisons for each input $x$ to obtain the argmax. This strategy results in an algorithm that asks $\gamma (k-1)$ comparisons, where $\gamma$ is the number of argmax queries used. 

Here we show that one can in fact do better than simulating argmax, by asking the ``right'' comparisons in an active fashion. These beneficial comparisons are closely related to the ``Label Neighborhood Graph'' (see Figure \ref{fig:example_graph}) where labels are neighbors if they share a decision boundary. We show that it is sufficient to ask queries only about neighbor pairs in this graph. Thus, if this graph is sparse, active learning can be implemented with fewer queries. In particular, for linear classifiers in $\mathbb{R}$, each class has at most two neighbors, and thus the neighborhood graph is very sparse, and our proposed active learning approach is highly effective.
Taken together, our results demonstrate the richness of the label-comparison setting, and the ways in which its query complexity depends on the structure of the data. \gale{Say something about the results once they are in place. Also, it might be worthwhile to add to this first sentence the idea that in settings where the labler is limited (can't easily do argmax labeling), not much if lost if some a-priori knowledge is known about class relationship and give an example where it is natural to have such knowledge} 

\shay{Agreed, it would be nice if we can summarize the contribution in one-two sentences.
As far as I understand there are two main aspects: (i) argmax queries can be unrealistic in some applications whereas comparison queries are intuitive and useful in practice, 
(ii) while there is an obvious reduction whereby one simulates an argmax query using $k-1$ comparisons, there are natural learning tasks in which one can do significantly better by exploiting the geometry of comparisons more cleverly. }

\section{Related Work} \label{sec:related}

Several lines of works have addressed alternative modes of supervision for multi-class learning.

{\bf Bandit Feedback:} In this setting (e.g., \cite{kakade2008efficient,crammer2013multiclass}) the learner only observes whether its predicted class is correct or not.
On the one hand, this feedback is stronger than label comparisons, because positive bandit feedback implies knowledge of the argmax. On the other hand, label comparisons provide more information than bandit feedback, because comparisons provide knowledge about relative ordering of non-argmax labels.

{\bf Maxing from pairwise comparisons.}
 Maximum selection (maxing) from noisy comparisons is well-studied problem.  \cite{falahatgar2018limits} give an overview of known results under various noise models. Here, we show that for multiclass learning, using comparisons to first learn the global structure of the problem is more efficient than only using them for maxing. \cite{daskalakis2011sorting} consider maxing in partially ordered sets, where some pairs may be incomparable, which is interesting to explore in our setting.

{\bf Dueling Bandits:} In online learning, learning from pairwise comparisons is studied under the dueling bandits setting
 \citep{saha2021adversarial, dudik2015contextual}, 
in which the learner ``pulls'' a pair of arms and observes the result of a noisy comparison (duel) between them. The objective in these cases is to minimize the regret w.r.t a solution-concept from the social choice literature, such as the Condorcet winner  \citep{yue2012k}, Borda winner, Copeland winner, or the Von Neuman winner \citep{dudik2015contextual}. The focus on such regret minimization objectives is principally different from ours, since our primary goal is to minimize the number of queries made, rather than minimizing an online loss.

{\bf Active Learning with rich supervision:} Several works have explored alternative forms of supervision. \cite{balcan2012robust} explore class-conditional queries, where the annotator is given a target label and a pool of examples, and must say whether one of the examples matches the target label. Several works \citep{kane2017active,hopkins2020power,xu2017noise} have studied comparison queries on instances, where the annotator receives two inputs $x_1,x_2$ and reports which one is more positive (for binary classification).
\cite{ben2022active} study active learning of polynomial threshold functions in $d=1$ using derivative queries (e.g., is a patient getting sicker or healthier?). 
Our supervision is conceptually different from all of these, as it compares between several labels on the same example $x$.

{\bf Learning Ranking as a Reward Signal:} A recent line of work demonstrated that pairwise label-comparisons elicited from humans can be used to improve the performance of LLMs. \cite{stiennon2020learning} collect a dataset of human comparisons between summaries of a given text, and use it to obtain better summarization policies, and \cite{ouyang2022training} extend this idea to aligning LLMs with user intent. Our focus here is to understand the theoretical properties of such label comparisons, which we expect will result in more effective ways of collecting and using such comparisons.


\section{Preliminaries}
\label{sec:prelims}
\textbf{Multi-class learning}. Let $\X\subset \R^{d}$ denote the feature space and~$Y$ denote the label space, consisting of $k$ classes. We use $\D$ to denote an (unknown) distribution on $\X$ and $\H$ to denote a class of target functions, $f:\X\rightarrow \R^{k}$. In this work, our focus is on a realizable setting in which the target function is some (unknown) $f^\star \in \H$.
For $\xx \in \R^d$ and a class $i \in [k]$, $f_i(\xx)$ is the score assigned to class $i$ on instance $\xx$. Given a target function $f^\star$, the loss of a candidate classifier $f$ is the standard (multiclass) 0-1 loss: $   L(f) = \Pr_{\xx \sim \D}[\arg\max_{i \in [k]}f_i(\xx) \neq \arg\max_{i \in [k]}f_i^\star(\xx)]$.

Since we are interested in how the difficulty of learning scales with the number of classes $k$, we will explicitly parameterize hypothesis classes in terms of $k$, $\set{\H^k}_{k \in \N}$. For example, the class of homogeneous linear classifiers\footnote{Our convention will be to use homogeneous linear classifiers. Thus when we refer to our results for 1d, we mean the class $\H_{\linear}^{k,2}$.} over  $k \in \N$ classes in dimension $d$ is $\H_{\linear}^{k,d}=\left\{ h(\cdot;\WW): \WW \in \R^{k\times d}\right\}$, where  $h(\xx;\WW) = \WW \xx \in \R^{k}$. 


\textbf{Supervision Oracles. }
Pertinent to this work is a distinction between two types of access to the target multiclass function: \emph{argmax (i.e. label) queries} 
and \emph{label-comparison queries}.

\begin{definition}[Supervision Oracles]
\label{def:oracles}
Given a target function $\fs:\X\rightarrow \R^{k}$, we define the following oracles:
\begin{align*}
     & A_{\argmax}^{\fs}(\xx) = \arg \max_{i\in[k]}\fs_i(\xx) \\
     & A_{\comparisons}^{\fs}(\xx,j_1,j_2) = \mathbf{1}[\fs_{j_{1}}(\xx)>\fs_{j_{2}}(\xx)]
\end{align*}
In the rest of the manuscript we simply use $A^{\fs}$ to denote the supervision oracle, where it's understood that if it receives an input $\xx$ it invokes the argmax oracle and if it receives a triplet $\xx, j_1, j_2$ it invokes the comparisons oracle.


\end{definition}

\section{Passive learning}
\label{sec:passive}


We define the sample and query complexities of PAC learnability using both argmax and label-comparisons supervision.%
\footnote{For simplicity, we consider a PAC notion where the goal is to return $\epsilon$-accurate solutions with constant probability (e.g. 14/15). 
\shay{Perhaps it is more natural and clean to consider the expected error.}
 }
We begin with the usual \emph{passive learning} setup, and differentiate between the situation in which every example arrives with its argmax (i.e., the standard PAC setup), and where every example arrives with all the $k \choose 2$ pairwise label comparisons (essentially, the total order on the classes).

\begin{definition}[Sample complexity of passive learning with argmax supervision]
\label{def:passive_argmax}
Fix a distribution $\D$ over $\X$ and a target function $\fs:\X\to \R^{k}$. Let $\D_{\argmax}^{\fs}$ denote the distribution on $\X\times Y$ in which a sample $(\xx,y)\sim \D_{\argmax}^{\fs}$ is generated by drawing $\xx\sim \D$ and taking $y=A_{\argmax}^{\fs}(\xx)$.

We say that the sample complexity of passively learning a class $\set{\H^k}_{k \in \N}$ is $m_{\H}:(0,1) \times \N \to N$ if there exists a learning algorithm with the following property:
for every distribution $\D$ on $\X$, for every $k \in \N$, for every $\fs \in H^k$, and for every $\eps\in(0,1)$, given $m\geq m_{\H}(\eps, k)$ i.i.d samples from $\D_{\argmax}^{\fs}$, the algorithm returns an hypothesis $h$ s.t w.p at least $1-1/15$, $L_{\D}(h)\leq\eps$.
\end{definition}

\begin{definition}[Sample complexity of passive learning with label-comparisons.]
\label{def:passive_comparisons}
Fix a distribution $\D$ over $\X$ and a target function $\fs:\X\to \R^{k}$. Let $\D_{\comparisons}^{\fs}$ denote a distribution on $\X\times\left\{ \pm1\right\} ^{k^{2}}$ where a sample $(\xx,\{b_{ij}\}_{i,j=1}^{k})$ is generated by drawing $\xx\sim \D$ and for $i,j \in [k]$, taking $b_{ij}=A_{\comparisons}^{f}(\xx;i,j)$.
\amirg{should be $f^*$ here?}
We say that the sample complexity of passively learning a class $\set{\H^k}_{k \in \N}$ is $m_{\H}:(0,1) \times \N \to N$ if there exists a learning algorithm with the following property:
for every distribution $\D$ on $\X$, for every $k \in \N$, for every $\fs\in H^k$, and for every $\eps\in(0,1)$, given $m\geq m_{\H}(\eps, k)$ i.i.d samples from $\D_{\comparisons}^{\fs}$, the algorithm returns an hypothesis $h$ s.t w.p at least $1-1/15$, $L_{\D}(h)\leq\eps$.

\end{definition}

Note that in the latter setting, the learner receives strictly more information about every example than in the argmax supervision setting. Namely, the argmax can always be inferred from the total order on the classes. We will therefore consider label-comparisons as helpful in this setup if knowing all comparisons results in improvement 
to the sample complexity. Our first result is negative: in general, label-comparisons may not be helpful in the passive regime. 
\amirg{need to complement the below with an upper bound on the argmax setting. Otherwise it looks possible that this is a loose lower bound for the argmax and there is a gap.}\amirg{also we should probably say that the lower bound trivially applies to the argmax}
\begin{figure}[t]
    \centering
    \includegraphics[width=0.8\linewidth]{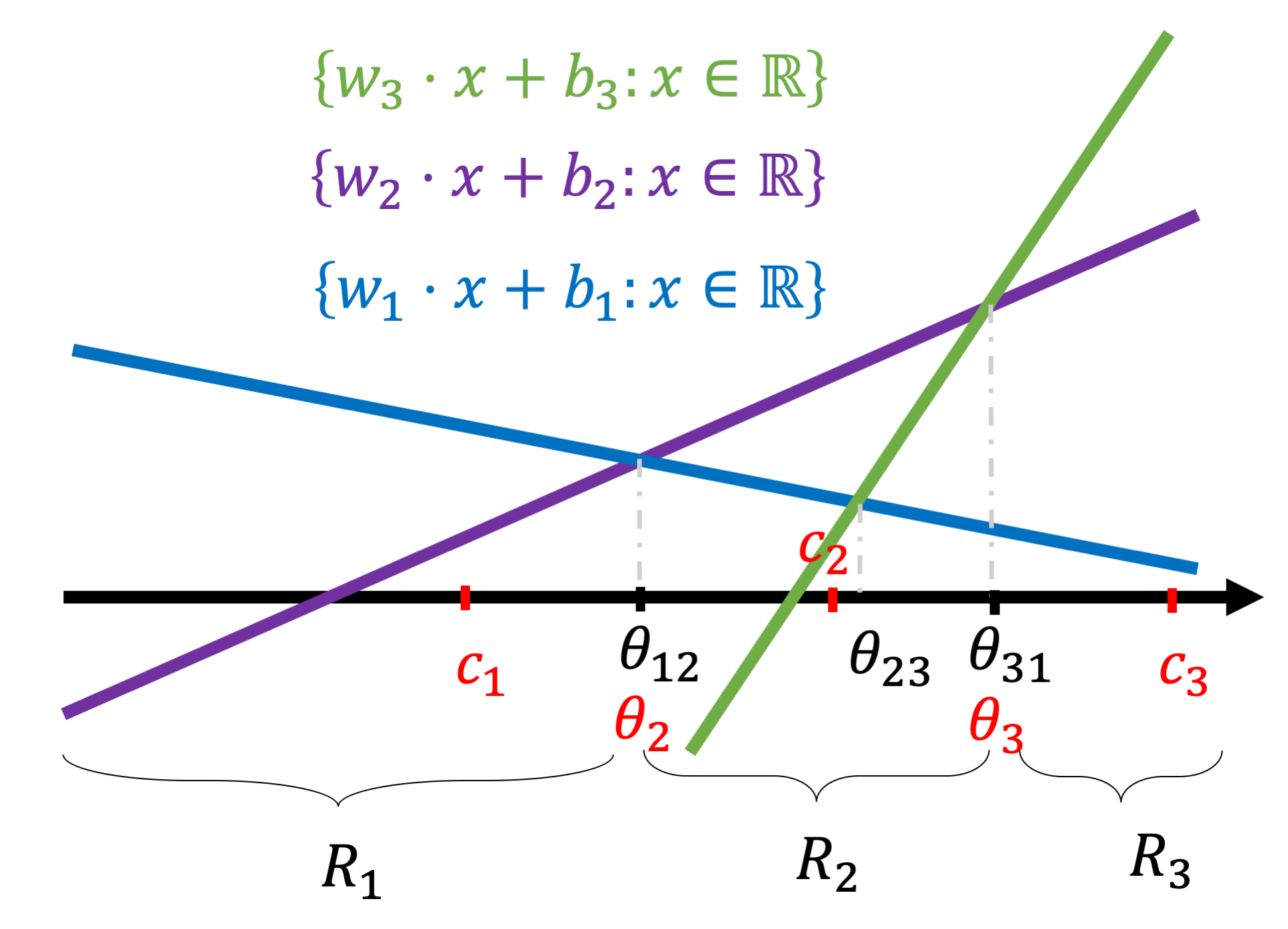} \vspace{-0.1in}
    \caption{Equivalent view of non-homogeneous linear classifiers in 1d in terms of 1NN classification. 
    }
    \label{fig:1nn}
\end{figure}

\begin{theorem}
\label{lemma:passive}
Any algorithm that PAC learns $\H_{\linear}^{k,2}$ must use $m_{\H}(\epsilon,k)\in\Omega(k/\epsilon)$ samples, irrespective of whether it has access to argmax or label-comparison supervision.
\end{theorem}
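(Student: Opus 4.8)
The plan is to prove the lower bound for the \emph{stronger} comparison oracle; since the argmax can always be inferred from the full comparison vector, the same bound then holds a fortiori for argmax supervision. The strategy is to reduce to the standard $\Omega(d/\epsilon)$ PAC lower bound with $d=k-1$ ``independently controllable'' points, where the whole difficulty is to arrange the hard family so that on it the comparison vector reveals \emph{nothing beyond} the argmax. It is cleanest to work in the equivalent 1d picture (Figure~\ref{fig:1nn}): a point $t$ is lifted to $(t,1)\in\R^2$, so class $j$ gets the affine score $a_j t + b_j$, and the predicted label is the upper envelope of these $k$ lines, partitioning $\R$ into consecutive intervals. A routine fact is that by taking strictly increasing slopes $a_1<\dots<a_k$ and choosing intercepts appropriately, $\H_{\linear}^{k,2}$ realizes every labeling of the line into $k$ consecutive intervals carrying classes $1,\dots,k$ with an \emph{arbitrary} increasing breakpoint sequence $t_1<\dots<t_{k-1}$.

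First I would fix well-separated anchor points $z_1<\dots<z_{k-1}$ and a tiny $\eta$ much smaller than the spacing. For each bit string $\sigma\in\{0,1\}^{k-1}$ I define a target $\fs^\sigma\in\H_{\linear}^{k,2}$ whose $i$-th breakpoint sits at $t_i = z_i + (2\sigma_i-1)\eta$; thus $z_i$ falls just left or just right of the $i/(i{+}1)$ boundary, so its label is $i$ or $i{+}1$ according to $\sigma_i$ alone, and the $k-1$ choices are independent. This yields a Natarajan-shattered set of size $k-1$.

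The crux is the claim that on this family the comparison vector at each $z_i$ is a fixed function of $\sigma_i$, hence no more informative than the argmax. Here I would exploit the convexity of the envelope: near the $i$-th breakpoint only classes $i$ and $i{+}1$ are near the top, with every other class below by a margin of order the spacing (governed by slope gaps times interval lengths), while flipping a single bit moves breakpoints by only $O(\eta)$. Concretely, flipping $\sigma_i$ shifts the intercepts of a contiguous block of classes by $O(\eta)$, which flips exactly the $i$ vs $i{+}1$ comparison and leaves all background comparisons (whose margins are $\gg k\eta$) with their signs intact; flipping any $\sigma_{i'}$ with $i'\neq i$ shifts classes $i$ and $i{+}1$ together, changing neither the $i$ vs $i{+}1$ comparison nor the argmax at $z_i$. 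Choosing $\eta$ small enough that the accumulated $O(k\eta)$ perturbations stay below all background margins makes the bookkeeping go through; this margin control is the main obstacle.

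Given the reduction, the bound follows from the standard ``unseen coordinates'' argument. Put probability mass $8\epsilon/(k-1)$ on each $z_i$ and the remaining mass on a point far to the left, where class $1$ is the argmax for \emph{every} $\fs^\sigma$ (so it is always classified correctly and contributes no error). Draw $\sigma$ uniformly. Since a passive sample carries information about $\sigma_i$ only if it lands on $z_i$, after $m\le (k-1)/(16\epsilon)$ samples a constant fraction of the $z_i$ are never observed; on each such $z_i$ any learner errs with probability $1/2$, so the expected error (over $\sigma$ and the sample) exceeds $2\epsilon$. A Markov argument then fixes a single $\fs^\sigma$ for which the error exceeds $\epsilon$ with constant probability, forcing $m\in\Omega(k/\epsilon)$.
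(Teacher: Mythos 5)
Your proposal is correct in its overall strategy and reaches the theorem by a genuinely different route than the paper. The paper factors the argument through a dimension: it adapts the Daniely--Shwartz dimension to comparison supervision (two hypotheses are $x_i$-close if their \emph{total orders} agree at every other shattered point while their argmaxes differ at $x_i$), proves once and for all that comparison-DS dimension $d$ forces $\Omega(d/\epsilon)$ samples, and then exhibits a shattered set of size $k$ for the class with $2k$ labels via a purely combinatorial construction: labels paired into $k$ triplets under the 1NN parametrization, where swapping which member of a pair sits at the middle of its triplet flips the argmax there but preserves every total order elsewhere \emph{exactly} (integer distances, no estimates). You instead build the hard family directly with $k$ labels in the upper-envelope picture, index it by bit strings via breakpoint perturbations of size $\eta$, and argue \emph{approximately}: background comparisons keep their signs because their base margins dominate the accumulated $O(k\eta)$ intercept shifts. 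Your route buys a self-contained argument, with $k$ rather than $2k$ labels and no new dimension machinery; the paper's route buys a reusable lower-bound lemma and a construction requiring zero margin bookkeeping. Both then run the same ``unseen coordinates'' endgame.

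The one place your sketch needs real repair is the parenthetical claim that all background margins are of order the spacing. That is true for pairs involving one of the two top classes (convexity of the envelope gives it), but the full comparison vector also contains pairs in which \emph{both} classes are far from the top, and for those the base margin at anchor $z_{i'}$ equals $\sum_{j_1\le l<j_2}(z_l-z_{i'})(a_{l+1}-a_l)$, a signed sum that can vanish. Indeed, for the most natural choice --- equispaced anchors with equal slope gaps --- it does vanish: for the pair $(i'-1,\,i'+2)$ evaluated at $z_{i'}$ the sum is proportional to $(-1)+0+(+1)=0$. In that degenerate configuration the sign of this background comparison is decided entirely by the $\eta$-perturbations, so a sample landing on $z_{i'}$ leaks the values of $\sigma_{i'-1},\sigma_{i'},\sigma_{i'+1}$, which is exactly what the construction must forbid. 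The fix is easy but must be stated: the requirements are finitely many non-degeneracy conditions, each asking that a linear function of the anchors be nonzero, so anchors in general position make every background margin positive, after which your choice of $\eta$ small relative to the minimum such margin completes the argument. The same check is needed at the heavy ``safe'' point carrying mass $1-O(\epsilon)$; placing it sufficiently far to the left makes all margins there large, so its comparison vector is constant over the family and reveals nothing.
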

\begin{proof}
For regular PAC learning (with argmax supervision), the standard approach for lower bounding the sample complexity is to lower bound the Natarajan dimension \citep{natarajan1989learning}. To extend this result to the setting of Definition \ref{def:passive_comparisons}, we employ a suitable variant of the dimension introduced in \cite{daniely2014optimal}. Following \cite{brukhim2022characterization}, we refer to it as the Daniely-Shwartz dimension. It provides a tighter lower bound on the sample complexity, and it is also easier to adapt to our label comparison setting.

To emphasize the difference between functions mapping $\xx \in \X$ to a single class $y \in [k]$ and functions mapping $\xx \in \X$ to a total order over the $k$ classes, we will denote the former with $f$ and the latter with $f_\nabla$ (and likewise for hypotheses classes). We write $\arg\max f_\nabla(\xx) \in [k]$ for the class ranked first in the total order $f_\nabla(\xx)$.

\begin{definition}
\label{def:close}
Given a set $\{\xx_{1},\dots,\xx_{n}\} \subset \X$, we say that $f_{\nabla}$ and $g_{\nabla}$ are $\xx_{i}$-close if:
\begin{align*}
    \begin{cases}
f_{\nabla}(x_{j})=g_{\nabla}(x_{j}) & j\neq i\\
\arg\max f_{\nabla}(x_{j})\neq\arg\max g_{\nabla}(x_{j}) & j=i
\end{cases}
\end{align*}
\end{definition}

With this we can define a variant of Definition 12 in \cite{daniely2014optimal} for the case of extra supervision.

\begin{definition}[The Daniely-Shwartz dimension for label comparisons.]
\label{def:ds-dim}
A set $\{\xx_{1},\dots,\xx_{n}\}$ is shattered by $\H_{\nabla}$ if there exists a finite subset of functions $\H_{\nabla}'\subset \H_{\nabla}$ with the following property: for every $f_{\nabla}\in \H_{\nabla}'$ and for every $i\in[n]$, there exists $g_{\nabla}\in \H_{\nabla}'$ such that $f_{\nabla},g_{\nabla}$ are $\xx_{i}$-close. The Daniely-Shwartz dimension of $\H_{\nabla}$, $\text{dim}(\H_{\nabla})$, is the maximal cardinality of a shattered set.
\end{definition}

\gy{Changed:}
In Appendix \ref{supp:passive_lower_bound}, we prove that the sample complexity of passively learning a class $\H$ with label-comparisons (Definition \ref{def:passive_comparisons}) 
 is $\Omega(\text{dim}(\H)/\eps)$. 
Thus, 
our objective is to prove that  $\text{dim}(\H^{2,k}_{\linear}) \in \Omega(k)$. 
\shay{I think we should include an explicit derivation of the sample complexity lower bound. 
Also, we can give ourselves some credit for this: right now it sounds like an easy extension of the lower bound given by Amit and Shai, but I don't think it is the case.}

\shay{I think we should include an explicit derivation of the sample complexity lower bound. 
Also, we can give ourselves some credit for this: right now it sounds like an easy extension of the lower bound given by Amit and Shai, but I don't think it is the case.}\amirg{not sure if this is a new comment from Shay...}

To show this, we will construct a shattered set of size $k$ for $\H^{2,2k}_{\linear}$. Consider $2k$ labels of the form $(b,i)$, where $b\in \set{0,1}$ and $i\in \set{1,…,k}$. Partition the numbers $1,…, 3k$ to $k$ triples: $\set{1,2,3}$, $\set{4,5,6}$, $\dots$ $\set{3k-2, 3k-1, 3k}$. We claim that $k$ middle points, $S =\set{2, 5, \dots 3k-1}$ are shattered by $\H^{2,2k}_{\linear}$. Showing this requires defining a subset $\F$ of $\H^{2,2k}_{\linear}$ with the property of Definition \ref{def:ds-dim}. To define each function $f_{\nabla}\in \F$ we will use an equivalent parametrization of linear classifiers in 1d as 1NN classification. i.e., each total order in 
$\H^{2,2k}_{\linear}$ is parameterized by $\cc \in \R^{2k}$, where the total order $h(x; \cc)$ is the one implied by sorting the classes according to the distance of their centers $\cc$ to $x$. See Figure \ref{fig:1nn} for an illustration. With this parameterization in mind, $\F$ consists of all functions which satisfy the following: for each $i \leq k$, the centers corresponding to labels  $(0,i)$ and $(1,i)$ are located in the $i$’th triplet, and \emph{exactly one of them} is located in the middle of the triplet, on the point $3i-1$. By construction, $\card{\F} = 4^k$ (for each of the $k$ triplets we need to specify which of the two centers is located in the middle of the triplet,
and whether to locate the other center on the left or on the right of it).

To see that $S$ is shattered, consider $f_\nabla\in \F$ and a point $3i-1\in S$.
W.l.o.g, assume that the center located on $3i-1$ is $(0,i)$. We define $g_\nabla \in \F$ based on the location of the center of $(1, i)$, which by definition of $\F$, could be either to the right (on $3i$) or to the left (on $3i-2$). In the first case, $g_\nabla$ is obtained by shifting both centers one unit to the left:  in $g_\nabla$ the center $(0,i)$ is located on $3i-2$ and the center $(1,i)$ is located on $3i-1$. In the second case, $g_\nabla$ is obtained by shifting both centers one unit to the right: in $g_\nabla$ the center $(0,i)$ is located on $3i$ and the center $(1,i)$ is located on $3i-1$. By the definition of $\F$, $g_\nabla \in \F$. Crucially, $f_\nabla$ and $g_\nabla$ are $\set{3i-1}$-close (Definition \ref{def:close}): moving from $f_\nabla$ to $g_\nabla$ the center located on $3i-1$ (and therefore the argmax) has changed, but the total order for every other point in $S$ is remained unchanged, per the requirement of Definition \ref{def:close}.  This proves $S$ is shattered, and so $\text{dim}(\H^{2,k}_{\linear}) \in \Omega(k)$, as required.
\end{proof}


\emph{Remark.} An interesting open question is whether this negative result can be extended to other classes (e.g. linear classifiers in higher dimensions). Technically, this requires lower bounding the the DS dimension of the class, as we did here for $\H_{\linear}^{2,k}$. We conjecture that for $d\gg 1$ the negative result can be extended in a distribution-specific manner (e.g., restricting to distributions with properties such as margin and sparsity); see the discussion in Appendix \ref{supp:passive_experiments}, where we report experimental results for the passive learning setting. 


\section{Active Learning}
\label{sec:active}

Next, we consider the \emph{active} learning setting. Specifically, we focus on pool-based active learning, where the learner has access to unlabeled samples and can decide which queries to ask the oracle for (including not asking any queries). The performance of the algorithm is now measured in terms of the query complexity, namely the number of queries it makes to the labeling oracle in question.

\begin{definition}[Query complexity of active learning.]
\label{def:query_complexity}
The  query complexity of actively learning a class $\set{\H^k}_{k \in \N}$ is $q_{\H}:(0,1) \times \N \to N$ if there exists a function $m_{\H}:(0,1) \times \N \to N$ 
and a learning algorithm with the following property:
for every distribution $\D$ on $\X$, for every $k \in \N$, for every $\fs \in \H^k$, and for every $\eps\in(0,1)$, given $m\geq m_{\H}(\eps, k)$ i.i.d samples from $\D$ and at most $q_{\H}(\eps, k)$ queries to $A_{\argmax}^{\fs}$, 
the algorithm returns an hypothesis $h$ s.t w.p at least $7/8$, $L_{\D}(h)\leq\eps$. We refer to $q_\H$ as the query complexity of learning $\H$ with argmax supervision or with label-comparison supervision, depending \amirg{on?} the oracle $A^{\fs}$. \amirg{the unlabeled complexity is not addressed here. Is it part of the definition? Also we don't address it in the result we give for active sample complexity. Do we assume infinite unlabeled there?} \gy{to simplify, we focus on the query complexity  and in this definition only requires that there exists some unlabeled sample complexity function (I think of it as being polynomial in $1/\eps$, but I'm not sure we need to say this).}

\end{definition}

We note that every active learning algorithm that uses argmax queries can always be simulated using comparison queries: in the adaptive setting (where the choice of query to ask at time $t$ can depend on the previous answers), $k-1$ label-comparison queries suffice to implement a ``tournament'' that reveals the argmax. This provides a generic way to use the label-comparison oracle: simply request the label-comparison queries necessary for a ``regular'' active learner. We therefore say that \emph{comparisons are useful for active learning} if the number of label-comparison queries required to learn a class $\H$ is \emph{strictly lower} than the number of label-comparison queries required to simulate the best active learner that uses argmax queries to learn $\H$. 

Interestingly, the distinction between passive and active learning is important.
Our main result is that when the learner is allowed to decide which queries to request, label-comparisons are helpful: we provide a learning algorithm that uses label comparisons more efficiently than simply using them to implement the best ``regular'' active learner.

\begin{theorem}
\label{prop:active}
 The label-comparison query complexity for active learning $\H^{k,2}_{\linear}$ is $\tilde{O}(k \cdot \log \frac{1}{\eps})$, whereas the query complexity of simulating the best argmax active learner is $\tilde{\Omega}(k^2 \cdot \log \frac{1}{\eps})$.
\end{theorem}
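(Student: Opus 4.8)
\section*{Proof proposal for Theorem~\ref{prop:active}}

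The plan is to exploit the one-dimensional geometry. After homogenization, each class $i$ corresponds to an affine score $\ell_i(x)=a_i x + b_i$ of the real input $x$, and $A_{\argmax}^{\fs}$ returns the class attaining the upper envelope $\max_{i\in[k]}\ell_i(x)$. This envelope is piecewise constant with at most $k-1$ breakpoints; each breakpoint is the crossing of two lines that are consecutive on the envelope, and these consecutive pairs are exactly the edges of the label neighborhood graph. The crucial observation is that a comparison query $A_{\comparisons}^{\fs}(x,i,j)=\mathbf{1}[\ell_i(x)>\ell_j(x)]$ is a threshold in $x$ whose threshold is the crossing point $x_{ij}$ of lines $i$ and $j$; hence for any \emph{fixed} pair we can binary-search $x_{ij}$ to mass-precision $\delta$ with $O(\log\frac1\delta)$ comparisons, after drawing $\tilde{O}(k/\eps)$ unlabeled points from $\D$ to convert geometric precision into distributional mass.

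For the upper bound I would use a two-phase algorithm. Phase~1 recovers the combinatorial structure of the envelope --- the left-to-right sequence of winning classes and the coarse locations of the breakpoints --- by divide and conquer: split the $k$ lines into two halves, recursively compute each half's upper envelope as a sorted list of (class, interval) pieces, and merge the two envelopes by a left-to-right sweep that, at each step, compares the two currently-dominant classes (one comparison query) and binary-searches for their crossing. Since each merge only ever compares two \emph{explicitly known} classes, it can be driven purely by the comparison oracle, and with $O(\log k)$ recursion levels Phase~1 uses $\tilde{O}(k)$ comparisons, crucially avoiding the $\Theta(k)$-per-boundary cost of a naive sweep (which would give $\Theta(k^2)$). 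Phase~2 then refines each of the $\le k-1$ neighbor pairs identified in Phase~1 by binary-searching its boundary to mass $\eps/k$, costing $O(\log\frac{k}{\eps})$ comparisons per boundary and $\tilde{O}(k\log\frac1\eps)$ in total; summing the two phases yields the claimed $\tilde{O}(k\log\frac1\eps)$, with the $\log\frac1\eps$ factor incurred only on neighbor pairs.

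For the lower bound I would establish two facts and multiply them. First, any learner restricted to argmax queries needs $\tilde{\Omega}(k\log\frac1\eps)$ of them: take $\D$ roughly uniform with $k$ equal-mass intervals and a target whose $k-1$ boundaries may sit anywhere inside their intervals; localizing one boundary to mass $\eps/k$ requires $\Omega(\log\frac1\eps)$ label queries, and a label query near one boundary reveals nothing about the others, so the $k-1$ localizations cannot be shared. Second, simulating a single argmax query with comparisons is a maximum-finding problem over $k$ affine values, which requires $k-1$ comparisons in the worst case; by arranging the instance so that at every queried point the argmax is revealed only after essentially all pairwise comparisons, each of the $\tilde{\Omega}(k\log\frac1\eps)$ simulated argmax queries costs $\Omega(k)$ comparisons, for a total of $\tilde{\Omega}(k^2\log\frac1\eps)$.

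I expect the main obstacle to be the non-amortization step in the lower bound: arguing that the comparisons spent answering earlier argmax queries do not reduce the cost of later ones, even though all answers must be consistent with a single global target $\fs\in\H^{k,2}_{\linear}$. This requires an adversary argument that simultaneously (i) keeps the identity of the argmax at each queried point undetermined until $\Omega(k)$ comparisons are made there, and (ii) maintains global realizability together with the independence of the $k-1$ boundary localizations. On the upper-bound side the analogous delicate point, already flagged above, is verifying that the divide-and-conquer envelope computation stays correct and at $\tilde{O}(k)$ comparisons when it is driven only by the comparison oracle rather than by explicit knowledge of the lines.
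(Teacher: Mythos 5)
Your upper bound is, in essence, the paper's: learn the one-dimensional class structure with $\tilde{O}(k)$ comparisons, then binary-search each of the $O(k)$ neighboring-pair boundaries to mass $\eps/k$, paying $O(\log\frac{k}{\eps})$ per boundary and aggregating. The difference is your Phase 1. The paper avoids envelope merging entirely by exploiting that two affine score functions cross at most once: querying a pair $(i,j)$ at the leftmost and rightmost sample points either determines which class precedes the other (if the two answers differ) or certifies that the losing class is never the argmax on the sampled range and can be pushed to the end of the order. An ordinary comparison sort, with two oracle calls per sort-comparison, therefore recovers the class order --- hence the path-shaped neighborhood graph --- in $O(k\log k)$ queries, with no need to order approximately-localized crossing points. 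Your divide-and-conquer merge can probably be patched (combinatorial mistakes should only occur on low-mass intervals), but the correctness issue you flag is real, and the sorting trick makes it moot.

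The genuine gap is in your lower bound, specifically part (2). In the paper, ``simulating the best argmax active learner'' means the generic reduction in which every argmax query is implemented by a fresh $(k-1)$-comparison tournament; the simulation cost is therefore \emph{by definition} $(k-1)$ times the argmax query complexity, and the only thing left to prove is that any argmax learner needs $\tilde{\Omega}(k\log\frac{1}{\eps})$ argmax queries. The paper gets $\Omega(\frac{k}{\log k}\log\frac{k}{\eps})$ by a counting argument: an argmax-based active learner is a decision tree of branching factor $k$ that must distinguish the $k!{n \choose k-1}$ realizable labelings of $n$ points in one dimension, which forces the stated depth. Your part (1) is an acceptable substitute for that step. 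But the stronger statement you aim for in part (2) --- an adversary forcing $\Omega(k)$ comparisons per argmax query even when information is shared across queries, consistently with a single global target --- is not merely the ``main obstacle''; it is false. Once the left-to-right order of the classes is known (and it is learnable with $O(k\log k)$ comparisons, as your own Phase 1 shows), an argmax query at any point $x$ can be answered with $O(\log k)$ comparisons by binary search: for classes adjacent in the order, their comparison at $x$ flips sign exactly at their shared boundary, so each answer halves the set of candidate argmaxes. Hence $q$ argmax queries can always be answered with $O(k\log k + q\log k)$ comparisons in total, and no instance forces an amortized per-query cost of $\Omega(k)$. So you should abandon the non-amortization argument entirely: adopt the paper's definitional reading, under which multiplying your part (1) by the factor $k-1$ finishes the proof.
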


The proof of Theorem \ref{prop:active} will employ a specific multiclass to binary reduction that uses the concept of the \emph{label neighborhood graph} of the target classifier. Intuitively, two classes $i$ and $j$ are considered neighboring if they share a decision boundary; i.e., there are two arbitrarily close points in $\R^d$, where for one the argmax is $i$ and for the other the argmax is $j$. See Figure \ref{fig:example_graph} for an example of the label neighborhood graph of a linear classifier in $d=2$.

\begin{figure}
    \centering
    \includegraphics[width=0.95\linewidth]{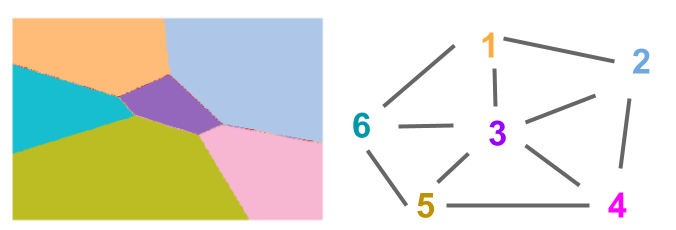}
    \caption{Decision regions of a linear classifier in 2d (left) and its corresponding label neighborhood graph (right).}
    \label{fig:example_graph}
\end{figure}

\begin{definition}[Label Neighborhood graph]
\label{def:neighborhood}
Fix a continuous function $f: \R^d \to \R^k$. The neighborhood graph $G = G(f)$ is an undirected graph on $k$ vertices, with an edge between vertices $i \in [k]$ and $j \in [k]$ if and only if there exists $\xx \in \R^d$ for which for every $r \in [k]$, $f_i(\xx) = f_j(\xx) \geq f_r(\xx)$.
\end{definition}

To simplify notation, we use $(i,j)\in G$ to refer to an edge in $G$, and $\deg(G)$ for the total number of edges. The degree of 
 $i \in [k]$ is the number of neighbors $i$ has in $G$. 

\begin{algorithm}[t]
  \caption{\textbf{$\mathtt{NbrGraphM2B}$: active learning of $\H^{k,d}_{\linear}$ using $G$.}}
  \label{algo:beyond-1d}
      \begin{algorithmic}
        \STATE {\bfseries Input:}  $\eps > 0$, a binary active learning algorithm $\mathtt{B}$ with query complexity $q_b(\gamma)$, a  neighborhood graph $G$.
        \STATE {\bfseries Output:} $f: \X \to \R^k$.
        \vspace{1mm}
        \FOR{$(i,j) \in G$}
        \STATE 
        Use  $\mathtt{B}$ to learn a binary classifier that distinguishes class $i$ from class $j$ with error at most $\eps / \deg(G)$.
        \ENDFOR
        \STATE Let $C$ denote the set of all the learned binary classifiers.
        \STATE Return $f^{(G, C)}$ (Definition \ref{def:f_GC}).
        \vspace{1mm}
      \end{algorithmic}
\end{algorithm}

We next define $\ALalgo$ (Neighborhood Graph Multiclass-to-Binary), a procedure for actively learning a multiclass classifier $f$ using a neighborhood graph $G$ (see Algorithm \ref{algo:beyond-1d}). Given as input a binary active learning algorithm 
and a neighborhood graph, it uses comparison queries to learn a binary classifier for distinguishing every pair of neighboring classes $i,j$ in $G$. It then aggregates these into a multi-class classifier using the following scheme:

\begin{definition}[Binary to multiclass aggregation.]
\label{def:f_GC}
Fix $(G,C)$, where $G$ is a neighborhood graph and $C=\{h_{ij}\}_{(i,j)\in G,i<j}$ is a collection of binary classifiers, one for every edge in $G$. The graph-based aggregation of $(G,C)$ is a function $f^{(G,C)}:\X\to\R^{k}$ defined as follows:
%
\[
f_i^{(G,C)}(\xx) = \frac{\sum_{(i,j)\in G}\boldsymbol{1}[h_{ij}(\xx)\geq0]}{\sum_{(i,j)\in G}\boldsymbol{1}}
\]
%
\amirg{This doesn't predict a label. It output probabilities. Do we really want it this way?}\gy{I think it's okay, we defined earlier that accuracy is measured by the argmax.}
Namely, the label of $\xx$ is the class in $[k]$ that won the largest fraction of ``duels'' against its neighbors in the graph $G$.
\end{definition}

An important component in analyzing
$\ALalgo$ is the following lemma. It establishes that when invoked w.r.t the true neighborhood graph $\GS$, if the binary classifiers are sufficiently accurate, then so is the resulting  multiclass classifier. See Supplementary \ref{sec:aggregation_proof} for proof.
\begin{lemma}
\label{lemma:aggregation}
Fix a distribution $\D$ on $\X$ and a classifier $\WWS$. Fix $(G,C)$. If $G = G(\WWS)$ and every $h_{ij}\in C$ has error at most $\eps/\deg(G)$, then $f^{(G,C)}$ has error at most $\eps$. \amirg{have we defined $e(G)$?} \gy{yes, it was defined after Def 5.3.}
\end{lemma}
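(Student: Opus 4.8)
The plan is to decouple the argument into a purely \emph{pointwise, deterministic} statement about the aggregation map of Definition \ref{def:f_GC} and a routine union bound over the edges of $G$. Call a binary classifier $h_{ij}$ \emph{correct at} $\xx$ if its prediction agrees with the true pairwise comparison, i.e. $\mathbf 1[h_{ij}(\xx)\ge 0]=\mathbf 1[(\WWS\xx)_i\ge(\WWS\xx)_j]$; by hypothesis each $h_{ij}$ is incorrect on a $\D$-set of mass at most $\eps/\deg(G)$. The core claim I would establish is: at every $\xx$ whose top score under $\WWS$ is unique, \emph{if all} $\{h_{ij}\}_{(i,j)\in G}$ \emph{are correct at} $\xx$, then $\arg\max_i f^{(G,C)}_i(\xx)$ equals the true label $i^\star(\xx)=\arg\max_i(\WWS\xx)_i$. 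Granting this, the error region of $f^{(G,C)}$ is contained in the union, over the $\deg(G)$ edges, of the regions where the corresponding $h_{ij}$ is incorrect (together with the measure-zero set of $\xx$ having a tie under $\WWS$). A union bound then gives $L_\D(f^{(G,C)})\le \deg(G)\cdot \eps/\deg(G)=\eps$, as required.

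Reducing the core claim to statements about the true comparisons (which, under the hypothesis, coincide at $\xx$ with the outputs of the $h_{ij}$), and writing $i^\star=i^\star(\xx)$, it suffices to verify two facts: (i) since $(\WWS\xx)_{i^\star}\ge(\WWS\xx)_j$ for \emph{every} class $j$, in particular for every neighbor of $i^\star$, class $i^\star$ wins all of its duels, so $f^{(G,C)}_{i^\star}(\xx)=1$; and (ii) every other class $j\ne i^\star$ loses at least one duel \emph{against a neighbor}, so $f^{(G,C)}_{j}(\xx)<1$. Facts (i) and (ii) together force $i^\star$ to be the \emph{unique} maximizer of $f^{(G,C)}_\cdot(\xx)$. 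Fact (i) is immediate; all the content lies in fact (ii), and this is exactly where the hypothesis $G=G(\WWS)$ is essential — the class that beats $j$ must itself be joined to $j$ by an edge of $G$, which is guaranteed precisely because $G$ is the \emph{true} neighborhood graph of Definition \ref{def:neighborhood}.

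The main obstacle is proving (ii), which I would do geometrically using linearity of $\WWS$. Fix $\xx$ and $j\ne i^\star$, choose an interior point $\xx_0$ of the decision region $R_j=\{\xx':(\WWS\xx')_j\ge(\WWS\xx')_r\ \forall r\}$, and travel along the segment $\gamma(t)=\xx_0+t(\xx-\xx_0)$, $t\in[0,1]$. Since $j$ is the top class at $t=0$ but not at $t=1$, the argmax changes at least once; let $c_1$ be the class taking over at the first transition $t_1$. By continuity $(\WWS\gamma(t_1))_j=(\WWS\gamma(t_1))_{c_1}$ equals the top score there, so $(j,c_1)$ witnesses an edge of $G(\WWS)=G$, i.e. $c_1$ is a neighbor of $j$. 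To see $c_1$ beats $j$ at $\xx$, consider $\psi(t)=(\WWS\gamma(t))_{c_1}-(\WWS\gamma(t))_j$, which is \emph{affine} in $t$ because $\WWS$ is linear and $\gamma$ affine; we have $\psi(t_1)=0$ and $\psi(t)>0$ for $t$ just above $t_1$ (where $c_1$ is strictly the argmax), so $\psi$ has positive slope and hence $\psi(1)>0$, giving $(\WWS\xx)_{c_1}>(\WWS\xx)_j$. The delicate points I would handle carefully are restricting to $\xx$ with a unique argmax (the complementary boundary set is $\D$-negligible for linear $\WWS$ and is absorbed by the error terms anyway) and the degenerate case of several classes tying at a single transition (resolved either by a generic perturbation of $\xx_0$ or by noting that all such tied classes are then pairwise neighbors). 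Combining (i), (ii), the containment of the error region, and the union bound over edges completes the argument.
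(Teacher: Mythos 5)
Your proof is correct and takes essentially the same route as the paper's: both reduce the lemma to the deterministic claim that whenever every $h_{ij}$, $(i,j)\in G$, agrees at $\xx$ with the true comparison $\sign(\WWS_{i}\xx-\WWS_{j}\xx)$, the aggregation $f^{(G,C)}$ predicts $\arg\max_{i}\WWS_{i}\xx$, followed by a union bound over the $\deg(G)$ edges, each contributing error at most $\eps/\deg(G)$. The only difference is one of completeness: the paper states this claim as an unproved ``observation'' (namely that $f^{(\GS,C^\star)}$ built from the true pairwise classifiers $h^\star_{ij}=\WWS_{i}-\WWS_{j}$ has zero error), whereas your segment/first-crossing argument actually proves its nontrivial half --- that every class other than the argmax loses a duel against one of its neighbors in $\GS$ --- so your write-up fills in a step the paper leaves implicit.
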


From this, we obtain the following upper bound on the query complexity of learning $\H_{\linear}^{k,d}$ using label comparisons.

\begin{corollary}
\label{cor:algoALcomplexity}
If the target neighborhood graph $\GS$ is known,\amirg{don't need to assume it's known. We can just give the result.}\gy{I think we should leave this as is. It makes sense to explicitly say this bc this is the query complexity of an algorithm that explicitly uses $\GS$.} the label-comparison query complexity of learning $\H_{\linear}^{k,d}$ is $O(\deg(\GS) \cdot q_b(\eps/\deg(\GS))$, where $q_b(\gamma)$ is the query complexity of active learning in the binary case (i.e. $k=2$).
\end{corollary}

Corollary \ref{cor:algoALcomplexity} suggests that label-comparisons will be useful when (i) the target neighborhood graph is sparse (has low degree), and (ii) it can be learned with relatively few label-comparisons.\amirg{not clear what the second point is. Does it refer to the $q_b$?}\gy{it refers to learning $\GS$} We are now ready to prove Theorem \ref{prop:active}: we will show that for learning $\H_{\linear}^{k,2}$ (the class for which we demonstrated comparisons are not useful in the passive setting), both these conditions hold. Hence, comparisons indeed provide a provable gain over argmax supervision.

\emph{Proof of Theorem \ref{prop:active}}: We will begin by instantiating the bound from
Corollary \ref{cor:algoALcomplexity} for $d=1$. Consider the degree of the neighborhood graph. Using the equivalent parameterization of linear classifiers in $d=1$ (see Figure \ref{fig:1nn}), it follows that every class $i \in [k]$ has at most $2$ neighbors: exactly the preceding and succeeding classes in the sorted order of the classes. Thus, for every $\fs \in \H_{\linear}^{k,2}$, $\deg(G(\fs)) = O(k)$. Second, active learning in $d=1$ is well-understood: unlike higher dimensions, the distribution-free query complexity of active learning for two classes is $q_b(\gamma)=\log (\frac{1}{\gamma})$ using binary search over $\R$ \citep{dasgupta2004analysis}. Plugging both of these facts into the upper bound of Corollary \ref{cor:algoALcomplexity}, we conclude that the query complexity for learning  $\H_{\linear}^{k,2}$ when the target neighborhod graph is known is $O(k \cdot \log \frac{k}{\eps})$. 

\begin{algorithm}[t]
  \caption{\textbf{Learning $G(\fs)$ for $\fs \in \H^{k,2}_{\linear}$. }}
  \label{algo:learning_G_1d}
      \begin{algorithmic}
        \STATE {\bfseries Input:} $n$ i.i.d samples from $\D$, $x_1, \dots, x_n$.
        \STATE {\bfseries Output:} A neighborhood graph $G$.
        \vspace{1mm}
        \STATE Set $x_L = \min_ix_i$ and $x_R =\max_ix_i $.
        
        \STATE Use a comparison sorting procedure to obtain a total order over the $k$ classes, $i_1 \succ \dots \succ  i_k$. Every time the sorting procedure requires the comparison between  classes $i,j \in [k]$, determine that  $i$ appears before $j$ if and only if (i)  $A^f_{\comparisons}(x_L, i,j) = 1$ and   $A^f_{\comparisons}(x_R, i,j) = 0$, or (ii) $A^f_{\comparisons}(x_L, i,j) = 1$ and $A^f_{\comparisons}(x_R, i,j) = 1$.
        

        \STATE Define a neighborhood graph $G$ with an edge between $i$ and $j$ iff classes are consecutive in the learned total order. 
        \STATE Return $G$.
        \vspace{1mm}
      \end{algorithmic}
\end{algorithm}


Next, we turn to the question of learning $\GS$ using label-comparison queries. Towards this, consider  Algorithm \ref{algo:learning_G_1d}. The algorithm receives a sample of $n=O(1/\eps)$ points from $\D$ and uses exactly $2k\log k$ label comparisons to return a neighborhood graph $G$. As we claim below the graph $G$ will be identical to $\GS$, except for possibly a set of edges pertaining to classes outside $S$ whose overall probability under $\D$ is smaller than $\eps$.
The key observation behind Algorithm \ref{algo:learning_G_1d} is that we can use exactly two label-comparison queries to infer whether a class $i$ appears before a class $j$, as long as both classes are ``represented'' in $S$.\footnote{We say a class $i$ is represented in $X$ if the position of $i$ in the total order of all the classes is greater-equal than the position of $\min(S)$ and smaller-equal than the position of $\max(S)$.} We can therefore use a total of $2k\log k$ queries to infer the total order of all the ``represented'' classes.

It remains to argue why $O(1/
\eps)$ samples suffice to guarantee that with high probability, classes that are not ``represented'' by $S$ have mass at most $\eps$.
To see this, fix $\D$ on $\R$ and denote $F(z) = \Pr_{x\sim \D}[x<z]$. Let $z$ be such that $F(z)=\eps$. We are interested in the number of samples $n$ required to guarantee that $\Pr_{S \sim \D^n}[\min(S) > z] < \delta$. Now,
\begin{equation*}
    \Pr_S[\min(S) > z] = \br{(1-F(z)}^n = (1-\eps)^n \leq \exp(-n\cdot \eps) 
\end{equation*}

And $\exp(-n\cdot \eps)  \leq \delta \iff n \geq \frac{1}{\eps}\log \frac{1}{\delta}$. Similarly, the same number of samples can be used to bound the the "tail" beyond $\max(S)$. Union-bounding over both events yields the required result.

To summarize, the full procedure for actively learning $\H^{k,2}_{\linear}$ is to run  
$\ALalgo$ with the neighborhood graph $G$ that is returned by Algorithm \ref{algo:learning_G_1d}. Combining Lemma \ref{lemma:aggregation} and the analysis of Algorithm \ref{algo:learning_G_1d}, we conclude that this procedure has an overall unlabeled sample complexity of $O(1/\eps)$, and an  overall query complexity of $O(k \log k + k \log \frac{k}{\eps}) = \tilde{O}(k \cdot \log \frac{1}{\eps})$.

To conclude the proof of Theorem \ref{prop:active}, it remains to lower bound the complexity of learning with argmax queries. We will prove that $\Omega(\frac{k}{\log k}\log\frac{k}{\eps})$ argmax queries are needed. This will imply that simulating any argmax active learning requires at least $\tilde{\Omega}(k^2 \cdot \log \frac{1}{\eps})$ label-comparisons. We will prove this via the label revealing task \citep[e.g., see][]{kane2017active}, where the goal is to reveal the correct labels of a given (realizable) sample of $n$ points, and show that  $O(\frac{k}{\log k}\log n)$ argmax queries are needed to reveal all $n$ labels.

Towards this, fix $n$  points and consider a tree that denotes the run of an active learning algorithm (with nodes being the queries asked and the children the possible answers). Note that the number of unique labelings corresponds to the number of leaves in the tree and the query complexity corresponds to the depth of the tree, which we denote $q$. The number of ways to arrange $n$ points into $k$ classes in 1d is $k!{n \choose k-1}$ ($k!$ options for ordering the classes and then ${n \choose k-1}$ options for locating the thresholds). Since the degree of the tree is $k$ for argmax queries, it must be that the 
$k^q \geq k!{n \choose k-1}$, which implies\footnote{Using the fact that $\log\big(k!{n \choose k-1}\big) = \log k!+\log{n \choose k-1} = k\log k+\log\left(\left[\frac{n}{k}\right]^{k}\right) = k\log k+k(\log n-\log k) = k\log n$} a lower bound $q \geq O(\frac{k}{\log k}\cdot\log n)$.

Together, this concludes the proof of Theorem \ref{prop:active}. $\square$


Our analysis suggests that when we can efficiently learn $\GS$ and it is sparse, label-comparisons provide a gain over argmax queries. We showed this  when $d=1$, and it is natural to ask to what happens for $d>1$. This requires addressing both the question of what is the binary active learning primitive that we use, as well as the questions of sparsity and learning the graph. See Supplementary \ref{supp:d_gt1} for discussion of these aspects.

\section{A general purpose active learning algorithm}
\label{sec:general}

\begin{algorithm}[t]
  \caption{\textbf{$\ALGDalgo$ }}
      \begin{algorithmic}
        \STATE {\bfseries Input:}  Label neighborhood graph $G$, buffer size $R$, steps $T$, confidence parameter $\tau$, learning rate $\eta$, comparison oracle $A^{\fs}$.
        
        \STATE {\bfseries Output:} classifier $h(\cdot; \WW)$, number of comparisons $q$.
        \vspace{1mm}
        \STATE Initialize $\WW^{(0)}$ , $L=0$, $q=0$, $b=0$.
        \FOR{$t = 1, 2, \dots, T$}
        \STATE Sample $\xx \sim \D$.
        \STATE Sample $(i,j)$ uniformly from the edges of $G$.
        
        \IF{ $\card{h_i(\xx;  \WW^{(t-1)}) - h_j(\xx;  \WW^{(t-1)})} < \tau$}  
        \STATE Obtain oracle comparison $c =2(A^{f*}(\xx,i,j)-0.5)$
        \STATE $L \pluseq \log(1+e^{-c\left(h_i(\xx;\WW)-h_j(\xx;\WW)\right)})$. 
        \STATE $q \pluseq 1$, $b \pluseq 1$.
        \ENDIF

        \IF{ $b \geq r$}
        \STATE Update $\WW^{(t)} \leftarrow \WW^{(t-1)} - \eta\cdot \frac{\partial L}{\partial \WW}$
        \STATE Clear buffer: $L = 0$, $b=0$. 
        \ENDIF
        \ENDFOR
        \vspace{1mm}
      \end{algorithmic}
      \label{fig:oursgdalg}
\end{algorithm}
The approach of Algorithm \ref{algo:beyond-1d} is to explicitly learn $\deg(G)$ binary classifiers and aggregate them into a single classifier, that is not in $\H_{\linear}^{k,d}$. For simplicity of optimization, we will prefer to work with models in $\H_{\linear}^{k,d}$. To do so, in Algorithm \ref{fig:oursgdalg}  we present the $\ALGDalgo$ algorithm, a natural variation which can work directly with such models.

It works as follows: we first initialize a multiclass model $h(\cdot ; \WW): \R^d \to \R^k$ (e.g. $\WW \in \R^{k,d}$ for a linear model, but $h$ can also be a neural network). For every data point $\xx$, we sample an edge $(i,j)$ in the graph $G$. This edge is a candidate label comparison. To decide whether to query it or not, we evaluate the difference in logits between labels $i$ and $j$. If this difference is smaller than $\tau$ we query the pair $(i,j)$  and add a binary cross entropy term that encourages the logit difference to have the correct sign. Once we accumulate sufficiently many comparisons, we perform an update step.

The remaining practical question is which graph $G$ to use. Recall that $\GS$ has an edge $(i,j)$ iff $\exists\xx$ where $j$ was the 2nd best label and $i$ was the argmax.  For 1d,\amirg{1D or $d=1$} we showed this could be learned from data effectively. We leave the general case open, and consider here practical recipes for $G$. The simplest approach is to base $G$ on prior knowledge regarding which classes are expected to be neighbors (e.g., via distances on their word embeddings, or other co-occurrence statistics). Another practical case is when first and second best labels are available without the $\xx$ values (e.g., consider asking individuals what are their first and second most favorite products, without keeping user info). Note that in this case, we will receive evidence of edges only for $\xx$ values sampled from $\D$. This corresponds to an empirical notion of the neighborhood graph, which we define below.  

\begin{definition}[Empirical Label Neighborhood graph]
\label{def:emp-neighborhood}
For a target function $\fs: \R^d \to \R^k$, the neighborhood graph $G_\D(f)$ is an undirected graph on $k$ vertices, where there is an edge between vertices $i$ and $j$ if and only if there exists $\xx \in \R^d$ whose probability under $\D$ is non-zero, and for which for every $r \in [k]$, $\fs(\xx)_i = \fs(\xx)_j \geq \fs(\xx)_r$.
\end{definition}
By definition, $\GS_\D\subseteq\GS$. One might hope that the discarded edges will not impact accuracy w.r.t $\D$. However, in the worst-case this is not true. Specifically, in proving Lemma \ref{lemma:aggregation} we used the fact that when $B$ is given by the \emph{true} binary classifiers (i.e. $h_{ij}=\WWS_{i}-\WWS_{j}$), the aggregated classifier $f^{(\GS,C)}$ has perfect accuracy on $\D$. This may fail for $\GS_\D$:   $f^{(\GS_\D,C)}$ may err on examples supported in $\D$; See Supplementary Figure \ref{fig:G_D_example} for an example. In Section \ref{sec:experiments} we observe that the performance of both graphs is comparable.

\section{Experiments}
\label{sec:experiments}

In this section we evaluate our label-comparisons algorithm $\ALGDalgo$ on synthetic as well as real data. 




We consider the online active learning scenario. At each
 round $t \in [T] = \set{1,...,T}$, the learner receives a batch of points drawn i.i.d. according to $\D$ and must decide which queries to request from the oracle $A^{\fs}$ (including not requesting any queries). 
We compare the following methods:
\begin{itemize}[leftmargin=*]
    \item  $\ALGDalgo (G)$: This is our algorithm which takes as input a graph $G$ and, for a given $\xx$, only considers label pairs in $G$ as possible pairs to query. For the given $\xx$, we iterate over all $(i,j) \in G$.  For each pair we check if $|W_i\xx - W_j\xx|$ is smaller than a fixed threshold. If it is, we query this pair. We consider different versions of $\ALGDalgo(G)$, that use different graphs.
    \item $\PassiveTournament$: This baseline uses label comparisons to simulate a standard argmax-based active learning algorithm \citep{joshi2009multi}. Namely, for each $\xx$, we evaluate the logits $W_y\xx$ and  query $\xx$ if the difference between the first and second best logits is below some threshold. In the standard argmax setting, we would have requested the label of $\xx$. With label 
    comparisons, we need to do this using $k-1$ active comparisons. Namely, we perform a tournament between labels to reveal the maximizer.
    \item $\ActiveTournament$: It may seem wasteful to ask for $k-1$ comparisons as above, since we may be sufficiently confident in some of these comparisons. We thus consider an ``active tournament'' algorithm: whenever the current model is sufficiently confident in a given pair in the tournament, we take the model's answer, and do not query for it.
\end{itemize}

\paragraph{Evaluation.} In online active learning, the quality of an algorithm is measured by its accuracy after $T$ rounds, and the total number of comparisons requested within these $T$ rounds. We use a linear teacher model to simulate the comparison oracle (Definition \ref{def:oracles}), and measure accuracy as the categorical accuracy\footnote{Specifically, use Top-K accuracy, where $K=0.1 \cdot k$.} 
on the test set, w.r.t the teacher's argmax. We use an \emph{accumulating  buffer} mechanism to control for the number of parameter updates across methods (each method accumulates the requested comparisons until the buffer is full, and only then performs a gradient update). 



\subsection{Synthetic data}
\label{sec:experiments:synthetic}
In this section we validate our theoretical findings from Sections \ref{sec:passive} and \ref{sec:active} on synthetic data. Specifically, for $d \in \N$ and $\hat{k} \in \N$ we consider $\D$ to be the uniform distribution on a unit sphere in $\R^d$ and draw a random linear target model $\WWS \in \R^{\hat{k},d}$. This yields a multiclass classifier with $k \leq \hat{k}$ distinct decision regions (``effective classes''). We draw data from $\D$ and divide it into training and test sets. 

{\bf Sparsity of the neighborhood graph. }We begin by computing the sparsity level of both the true neighborhood graph $\GS = G(\WWS)$ and the empirical neighborhood graph  $\GS_\D = G_\D(\WWS)$, where the latter is computed w.r.t the training set. We define the \emph{sparsity level} as the number of edges in $G$, divided by $k \choose 2$ (i.e,. the number of edges in a complete graph). \gale{In all figures, width of lines is out-of-proportion to fonts and fig size. Make thinner} In Figure \ref{fig:sparsity} we plot the sparsity level as a function of $k$ and $d$, as averaged over 25 random target models. We see that the empirical sparsity level tracks the true sparsity level, and that for a fixed dimension $d$, both decrease with the number of effective classes $k$. This confirms that we expect the sparsity to ``kick in'' when $k \gg d$. 
\begin{figure}[t]
    \centering
    \includegraphics[width=0.80\linewidth]{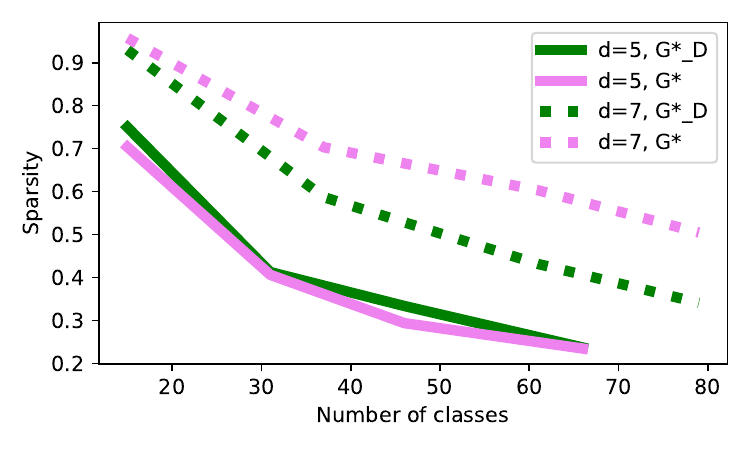}
    \caption{Sparsity level for a random linear model as a function of the number of effective classes $k$ for $d=5, 7$}
    \label{fig:sparsity}
\end{figure}

{\bf Comparisons of Active Learning Methods.} We next compare the different baselines described above. For $\ALGDalgo(G)$ we consider multiple variations, that use different versions of the graph $G$.
In Figure \ref{fig:synthetic_results} we report the performance of $\ALGDalgo$ relative to several natural baselines. 
First, it can be seen that the active tournament outperforms the passive one, suggesting that indeed some tournament queries can be avoided. Yet $\ALGDalgo$ outperforms 
the tournament baselines, indicating that tournament comparisons are generally not the optimal approach.
Within the $\ALGDalgo$ methods, using the true graph (either $G^*$ or $G^*_D$) provides the best performance, indicating that the graph plays an important role in active learning efficacy, and that $\ALGDalgo$ can use this structure.

\begin{figure}[t]
    \centering
    \includegraphics[width=0.75\linewidth]{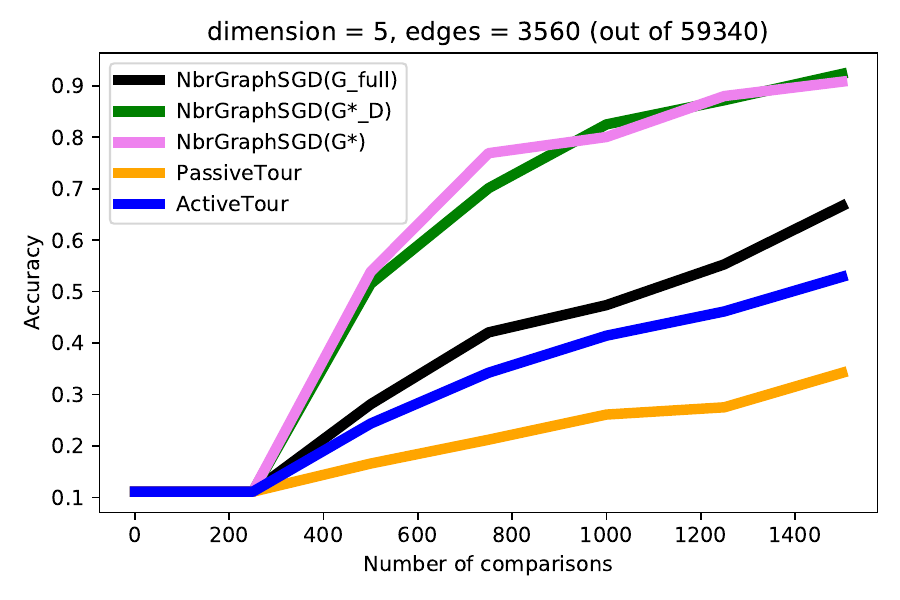}
    \caption{Comparing algorithm $\ALGDalgo$ w.r.t $\GS_\D$ (green) and $\GS$ (purple) against three baselines: passive tournament (yellow), active tournament (blue) and algorithm $\ALGDalgo$ with respect to a complete graph (black).}
    \label{fig:synthetic_results}
\end{figure}


\begin{figure}[t]
    \centering
    \includegraphics[width=1.0\linewidth]{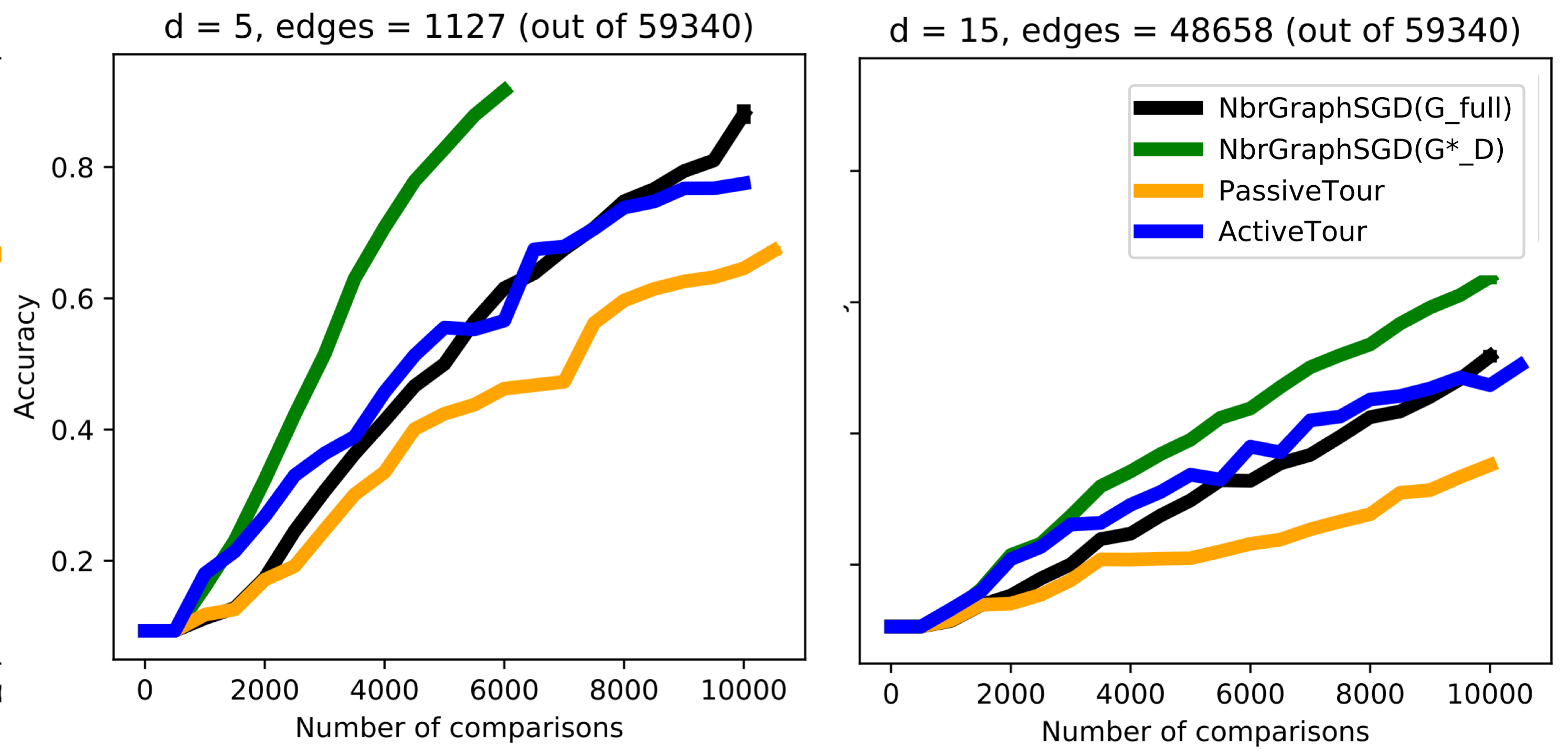}
    \caption{Comparing the performance of $\ALGDalgo$ w.r.t $\GS_\D$ (green) against the baselines on the QuickDraw dataset. In the plot titles, $d$ denotes the dimension of low-dimensional projection of the data and edges is $e(\GS)$, the number of edges in the true neighborhood graph of $\WWS$.}
    \label{fig:quickdraw_random_projection}
\end{figure}

\subsection{Real data}
\label{sec:experiments:real}

The QuickDraw dataset \citep{DBLP:journals/corr/HaE17}, is a collection of 50 million drawings across 345 categories, contributed by players of the game ``Quick, Draw!''. We use the bitmap version of the dataset, which contains these drawings converted from vector format (keystrokes) into 28x28 grayscale images. We randomly select $70,000$ examples from this large data and use 
$60,000$ as our training set and the rest as the test set.
We train a linear teacher on the data after randomly projecting it into $\R^d$. We then use the resulting model $\WWS$ to implement the label-comparison oracle (see Definition \ref{def:oracles}). We
denote the true graph of $\WWS$ (Definition \ref{def:neighborhood}) as $\GS$ and the empirical graph of $\WWS$ (Definition \ref{def:emp-neighborhood}, as computed w.r.t the training set) as $\GS_\D$. 
 
We begin by comparing the performance of $\ALGDalgo$ w.r.t $\GS_\D$ with the same baselines from Section \ref{sec:experiments:synthetic}. We explore the relationship between the  sparsity of the \emph{true graph} $\GS$  
and the performance of $\ALGDalgo$ w.r.t $\GS_\D$ as a function of the dimension $d$ and $k=345$. In Figure \ref{fig:quickdraw_random_projection} we report the query complexities w.r.t $d=5$ 
(left) and $d=15$ (right). Note that this is a realizable learning task since the models are measured in terms of their accuracy w.r.t the teacher's predictions, and the teacher is also a linear model. In line with our theoretical results from Section \ref{sec:active}, we observe that the gain from using our method (over e.g. the passive or active tournament baselines) is smaller when the true neighborhood graph is less sparse.



\section{Conclusions}
\gale{Remind the reader that we are still focused on the standard supervised objective. We might want to also recall that in a few other preambles to sections and not just mention this in the intro}
We studied the setting where annotators are asked to provide only pairwise label comparisons. We believe this is a natural setting as it is both easy for humans to provide, and still results in sufficient information for learning. \gale{Do we want to make this stronger and explicitly remind the reader that giving the argmax label can be prohibitive?} Our results provide several key characterizations of how this information should be gathered and used. We show that, perhaps counter-intuitively, there are cases 
for which having all the class comparisons per training point does not yield a 
 sample complexity advantage over just receiving the one true class label. On the other hand, in the active setting, we show that comparisons can be used in an effective way that goes beyond obtaining the argmax training labels. \gale{This is a negative wording and we should emphasize the fact that label comparisons are in face beneficial in this case. Also, finish with one sentence about demonstrated advantage in synthetic and real data}

Many interesting open questions remain. First, our focus was on linear classification, and it would be interesting to generalize the result to other classes (such as neural networks). Second, one can consider a mixture of comparisons and true-labels, since the latter may be easy to obtain in some instances, and hence query-complexity should count these cases differently. Finally, here we assumed that annotators can provide answers to all queries. In practice, some queries may not be answerable (e.g., labels are too ``close'' or both are equally bad), and it would be interesting to extend the formalism and practical algorithm to these cases.


\begin{acknowledgements} Gal Yona is supported by the Israeli Council for Higher Education (CHE) via the Weizmann Data Science Research Center, by a research grant from the Estate of Tully and Michele Plesser, and by a Google PhD fellowship, and this work was done during an internship at Google.  Shay Moran is a Robert J.\ Shillman Fellow, his research is supported in part by the Israel Science Foundation (grant No.\ 1225/20), by a grant from the United States - Israel Binational Science Foundation (BSF), by an Azrieli Faculty Fellowship, by Israel PBC-VATAT,  and by the Technion Center for Machine Learning and Intelligent Systems (MLIS). 

The authors wish to thank Ami Wiesel for contributing many ideas throughout the development of this work and for helpful feedback on this manuscript.
\end{acknowledgements}

\bibliography{uai2022-template}
\clearpage

\onecolumn
\appendix

\section{Passive learning lower bound}
\label{supp:passive_lower_bound}

\begin{theorem}
Let $\Hnab$ be a class whose Daniely-Shwartz dimension is $d$. Then, any algorithm that PAC learns $\Hnab$ using label-comparisons (Definition \ref{def:passive_comparisons}) must use $\Omega(d/\eps)$ samples in the worst case.
\end{theorem}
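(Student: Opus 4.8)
The plan is to adapt the classical packing lower bound for PAC learning (in the style of Ehrenfeucht--Haussler--Kearns--Valiant), with the Daniely--Shwartz shattering playing the role that an ordinary shattered set plays for the VC/Natarajan dimension. I would fix a set $\{x_1,\dots,x_d\}$ that is DS-shattered by $\Hnab$, together with its witnessing finite family $\Hnab'\subset\Hnab$. The hard distribution $\D$ would be supported on these $d$ points, with a small total mass $\Theta(\eps)$ spread over ``light'' points (each carrying mass $\Theta(\eps/d)$) and the remaining mass placed on a fixed ``anchor'' configuration common to all targets. The target is then drawn from a carefully chosen prior over $\Hnab'$. As usual, I would reduce learning to a guessing problem: any light point $x_i$ absent from the training sample forces the learner to predict its argmax with no direct evidence, and if the prior makes that argmax uncertain, each unseen light point contributes $\Theta(\eps/d)$ to the expected error.

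The ingredient specific to the comparison setting is that $x_i$-closeness (Definition \ref{def:close}) is exactly the right indistinguishability notion for comparison supervision. If $f_\nabla$ and $g_\nabla$ are $x_i$-close, then they induce the \emph{same} total order, and hence the same comparison-labeled example, at every point $x_j$ with $j\neq i$. Consequently a sample from $\D_{\comparisons}$ that misses $x_i$ is distributed identically under $f_\nabla$ and under $g_\nabla$, even though the two targets disagree on the argmax at $x_i$. This is precisely what collapses the extra power of comparisons: revealing the full order on the sampled points gives no information about the argmax at an unsampled light point, so the comparison lower bound reduces to the same guessing barrier as the argmax case. This is the step that makes the DS dimension, rather than merely its argmax variant, the right quantity to lower-bound against.

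The hard part, I expect, is \emph{not} the counting but the extraction of a good prior from the shattering family. To obtain the factor of $d$ I need $\Omega(d)$ light points whose argmaxes are \emph{simultaneously and independently} uncertain given any sample that misses them. The DS-shattering axiom only guarantees a single $x_i$-close neighbor from each function in each direction $i$, i.e. a ``pseudo-cube'' rather than a genuine product structure; in particular the flip graph in a fixed direction can be a star, in which case a uniform prior on $\Hnab'$ leaves the argmax at $x_i$ almost determined and that coordinate contributes almost no error. The plan to overcome this is to follow the Daniely--Shwartz averaging argument, which builds the hard instance directly from the shattering family through a coupling over the pseudo-cube rather than through an explicit combinatorial cube; an alternative is to first prune $\Hnab'$ so that each direction admits a balanced pairing (a matching covering a constant fraction of the support), on which the two paired targets are indistinguishable off $x_i$ and disagree at $x_i$, forcing error $\ge 1/2$ conditioned on missing $x_i$.

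With such a balanced prior in hand, the remaining estimate is routine: a sample of size $m$ misses any fixed light point with probability $1-\Theta(\eps m/d)$, so choosing $m\le c\,d/\eps$ for a small constant $c$ leaves a constant fraction of the light points unseen in expectation. Each such point contributes $\Theta(\eps/d)$ expected error, summing to $\Omega(\eps)$ expected error over the random target and sample. A reverse-Markov (or averaging) argument then produces a single target in $\Hnab$ and a single distribution for which the learner's output has error exceeding $\eps$ with probability greater than $1/15$, contradicting the PAC guarantee of Definition \ref{def:passive_comparisons} and yielding the claimed $\Omega(d/\eps)$ bound.
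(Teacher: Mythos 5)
Your proposal follows essentially the same route as the paper's proof: a hard distribution on a DS-shattered set with one heavy anchor point and $d-1$ light points of mass $\Theta(\eps/d)$, a target drawn at random from the witnessing family $\F$, indistinguishability of $x_i$-close targets on samples that miss $x_i$ (exactly how the paper uses Definition \ref{def:close}: $x_i$-close functions produce identical comparison-labeled examples at every $x_j$, $j\neq i$), a Markov bound showing a constant fraction of light points go unseen, and a reverse-Markov step turning expected error $\Omega(\eps)$ into error $>\eps$ with probability $>1/15$.

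The one place you diverge from the paper is the treatment of the prior, and your extra caution is warranted. The paper draws the target uniformly from $\F$ and argues that at an unseen point $x$ the learner is correct with probability at most $1/2$, because ``for every $f_\nabla$ on which the learner is correct there exists an equally likely $x$-close $g_\nabla$ on which it is wrong.'' As you point out, this inference needs the correspondence $f_\nabla\mapsto g_\nabla$ to be (essentially) injective, and Definition \ref{def:ds-dim} does not provide that: in the comparison variant, distinct total orders at $x$ can share the same argmax, so the functions of $\F$ agreeing with each other off $x$ can form a star whose $N$ leaves all have one argmax and whose center alone has another; under the uniform prior the posterior then concentrates on the leaves' argmax and the per-point error is $1/(N+1)$, not $1/2$. (This failure mode is specific to the comparison setting: in the original Daniely--Shwartz argument, two distinct behaviors that agree off $x$ must take distinct \emph{values} at $x$, so uniform weighting automatically suffices.) In other words, the ``hard part'' you isolate is not an artifact of your write-up; it is a real looseness in the paper's own one-line justification, which happens to be harmless for the witnessing family the paper constructs for $\H^{2,2k}_{\linear}$ (there every direction-class consists of exactly two behaviors with different argmaxes), but is not justified at the stated level of generality. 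That said, your own repair is also left incomplete: pruning $\Hnab'$ to a balanced per-direction matching can destroy the neighbor property, and hence the balance, in the \emph{other} directions, so the cross-direction consistency of the pruned family is precisely what still needs to be argued; and the Daniely--Shwartz averaging argument you cite as the alternative relies on the distinct-values property that comparisons break, so it too requires adaptation rather than mere invocation. Net: same skeleton as the paper, with a correctly identified (and still open, in both texts) gap at the per-point error-$1/2$ step.
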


\begin{proof}
Fix a class $\Hnab$ with Daniely-Shwartz dimension $d$. Let $Z = \set{x_1, \dots, x_d}$ be a set that is shattered by $\Hnab$, and let $\F$ denote the subset of $\Hnab$ whose existence is guaranteed by the definition of shattering\amirg{say more explicitly that this is the subset that shatters $Z$.}\gy{I don't think this will be clear, since it is not called that way in the definition.}. Our objective is to prove that there is a distribution $\D$ on $\X$ and a target concept $\hsnab$ that requires at least $d/\eps$ samples to learn. To do so, we will construct a distribution $\D$ and label it according to a target concept $\hsnab$ that is selected uniformly at random from $\F$. We will show that the expected error is high over the choice of $\hsnab$; this will imply that there is some fixed $\hsnab$ that also leads to high error.

At a high level, the distribution $\D$ we construct is similar to the ``hard'' distribution from the standard  PAC lower bound  (i.e., without label comparisons. See for example \citet{shalev2014understanding}). It puts a relatively large probability mass on one of the $d$ points in $Z$, say $x_{1}$, and splits the remaining probability on $x_{2},\dots,x_{d}$ uniformly, such that any algorithm that takes only $m$ samples from $\D$ will not even encounter a large fraction of $x_{2},\dots,x_{d}$. It is left to argue that the learner cannot do well on the ``unseen'' examples. In our case, we will show this follows directly by the definition of shattering used in the DS dimension. Intuitively, the learner cannot tell whether the label of an unseen example $x$ is given by $f_\nabla$ (in our case, the true target concept $\hsnab$) or by $g_{\nabla}$, where $f_\nabla, g_{\nabla}$ are $x$-close w.r.t $Z$ (Definition \ref{def:close}). Since these two concepts assign a different label to $x$, the learner cannot predict the label of $x$ w.p larger than $0.5$. In total, since there are many such “unseen” examples, the algorithm will incur a large error.

Formally, let $m=\frac{d-1}{64\epsilon}$ and let $A$ be any learning algorithm that observes a sample $S$ of at most $m$ i.i.d samples with label-comparison supervision before picking a hypothesis $h \equiv A(S)$.  Let $Z'=\{x_{2},\dots,x_{d}\}$, and define a distribution $\D$ on $Z$ with point mass $p(\cdot)$ defined as follows:
\begin{equation*}
 p(x)=\begin{cases}
1-16\epsilon & x\notin Z'\\
\frac{16\epsilon}{d-1} & x\in Z' \end{cases}  
\end{equation*}

To simplify, denote \amirg{shouldn't $\nabla$ be on the LHS as well? Perhaps just call it $h$ throughout.}\gy{Moved to using $h$ throughout.}
\begin{align*}
    \text{err}(h)&=\Pr_{x\sim \D}[\arg\max h(x)\neq\arg\max \hsnab(x)] \\
    \text{err}'(h)&=\Pr_{x\sim \D}[\arg\max h(x)\neq\arg\max \hsnab(x)\text{ and \ensuremath{x\in Z'}}]
\end{align*}

Since $\text{err}'(h)$ is a lower bound on $\text{err}(h)$, it suffices to show that $\text{err}'(h)$ is large. In particular, we will show that there exists a target concept such that $\Pr[\text{err}'(h) < \eps] > 1/15$.

Define the event $B(S)$: The sample $S$ contains less than $(d-1)/2$ points from $Z'$. Then it holds that:
\begin{equation}
\label{eqn:lower-bound-1}    
\Pr_{S\sim \D^m}[B(S)] \geq 0.5
\end{equation}

To see this, let the random variable $R$ denote the number of points in $S$ sampled from $\D^m$ that are in $Z'$. By the definition of $\D$, $\E[R]=m\cdot \frac{16\eps}{d-1} = 0.25\epsilon$, and by Markov's inequality, $\Pr[R>(d-1)/2] \leq 0.5$, so we have $\Pr_S[B(S)] \geq 1-\Pr[R>(d-1)/2] \geq 0.5$.

Next, we claim that $\E_{\hsnab,S}[\text{err}'(h)\vert B(S)]>4\epsilon$. Indeed, when $B(S)$ holds, $A$ has not seen at least $(d-1)/2$ of the points in $Z'$. For each of these points, the probability (over the choice of the random $\hsnab$ and $S\sim \D^m$) that the learner correctly predicts the label of $x$ cannot exceed 0.5. 
This is because by the definition of shattering, for every target concept $f_{\nabla}\in\F$ (say, such that $\arg\max f_{\nabla}(x)=\arg\max h_{\nabla}(x)$), there also exists an equally likely target concept $g_{\nabla}\in\F$ that is identical to $f_{\nabla}$ on $Z-\{x\}$ but induces a different label for $x$ (so $\arg\max g_{\nabla}(x)\neq\arg\max h_{\nabla}(x)$). Overall, we get

\begin{equation}
\label{eqn:lower-bound-2}
   \E_{\hsnab,S}[\text{err}'(h)\vert B(S)]>\frac{d-1}{2}\cdot\frac{1}{2}\cdot\frac{16\epsilon}{d-1}=4\epsilon 
\end{equation}

Combining Equations (\ref{eqn:lower-bound-1}) and (\ref{eqn:lower-bound-2}), we have $\E_{\hsnab, S}[\text{err}'(h)]>2\epsilon$.  In particular, there is some $\tilde{h} \in F$ such that $\E_{S}[\text{err}'(h)]>2\epsilon$\amirg{should't it be $h^*$ here? Maybe just call it $\tilde{h}$ to avoid confusion.} (when the error is computed as disagreement with the label of $\tilde{h}$). Fix this as the target concept, and let $p=\Pr_{S}[\text{err}'(h)>\epsilon]$\amirg{no need to define $p$}\gy{I just need it because it appears in the following equation (otherwise it's too long/not clear, I think)}. Note that by definition, $\text{err}'(h)\leq16\epsilon$ (it is only penalized on the mistakes on $Z'$), so

\begin{equation*}
    2\epsilon<\E_{\hsnab,S}[\text{err}'(h)]\leq16\epsilon\cdot \Pr_{S}[\text{err}'(h)>\epsilon]+\epsilon\cdot \Pr[\text{err}'(h)\leq\epsilon]=16\epsilon p+(1-p)\epsilon
\end{equation*}

From this, we have $p > 1/15$. To summarize, we demonstrated a distribution $\D$ and a concept $\hsnab \in \H_{\nabla}$ such that for any learning algorithm $A$ that uses $m=\frac{d-1}{64\eps}$ samples, $\Pr_S[\text{err}(A(S)) > \eps] > 1/15$, as required.

\end{proof}

\section{Passive Learning Empirical Evaluation}
\label{supp:passive_experiments}

\begin{figure}[h]
    \centering
    \includegraphics[width=0.85\linewidth]{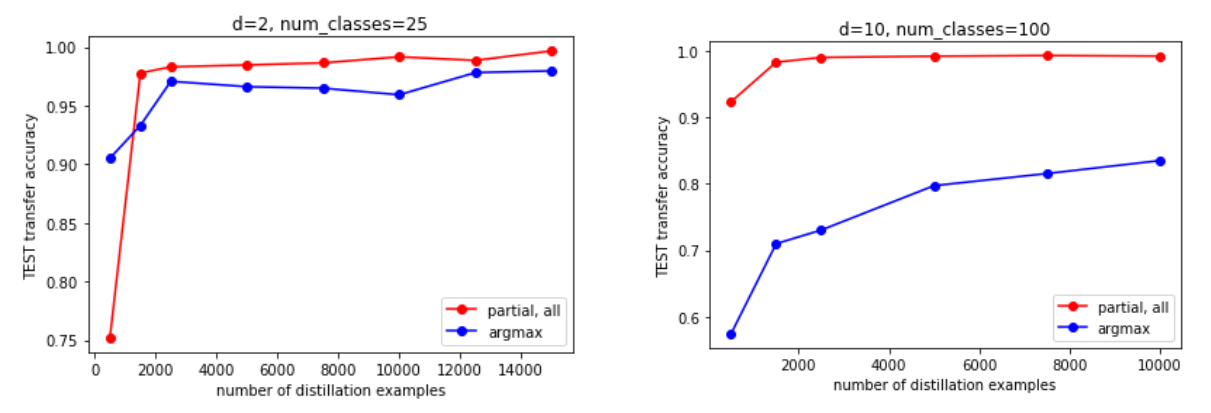}
    \caption{Comparing the transfer accuracy (y-axis) vs number of examples (x-axis) for different values of $d$ and two learners: one that has access to all the label-comparisons (red) and one that only has access to argmax supervision (blue). Results are averaged over $T=10$ independent trials.}
    \label{fig:passive_experiments}
\end{figure}

Recall that in Theorem \ref{lemma:passive}, we proved that the \emph{best} algorithm that uses all pairwise label-comparisons has no advantage (in terms of sample complexity in the PAC sense) over the \emph{best} algorithm that uses argmax labels, for learning linear classifiers in $d=2$. This suggests several natural questions. First, what happens for particular, practical algorithms, and second, to what extent this also holds for $d \gg 1$. 

To examine both questions, we consider the following empirical setup: we initialize a random linear teacher and draw $m$ random examples from a uniform distribution over $\R^d$. As our learning algorithm, we consider a simple gradient-descent algorithm that takes gradient steps to minimize the loss on pairwise label comparisons (see Algorithm \ref{fig:oursgdalg}). Thus, we compare between (i) argmax supervision, in which the label comparisons used to compute the loss $L$ are comparisons involving the correct label $y$ against every other label, (ii) full pairwise supervision, in which the loss $L$ is computed w.r.t all ${k \choose 2}$ pairwise label comparisons. See Figure \ref{fig:passive_experiments} for sample complexity plots comparing both algorithms, for $d=2$ (left) and $d=10$ (right). We see that for uniform data, having access to all the label-comparisons provides no gains when $d=2$ (as our negative result suggests), but does provide considerable gains when $d=10$. As discussed in Section \ref{sec:passive}, we conjecture that this is an artifact of the fact that the distribution is uniform -- and that under structural assumptions on $\D$ (e.g. sparsity or margin), the results for $d \gg 2$ will look similar to those for $d=2$.

\section{Active Learning Complexity for $d>1$} \label{supp:d_gt1}
Our analysis in the main text suggested that when we can efficiently learn $\GS$ and it is sparse, label-comparisons provide a gain over argmax queries. We showed this  when $d=1$, and it is natural to ask to what happens for $d>1$. This requires addressing both the question of what is the binary active learning primitive that we use, as well as the questions of sparsity and learning the graph. We discuss these two points below.

Regarding binary active learning, for $d>1$, active learning is known to no longer provide asymptotic benefits over passive learning in the distribution-free setting \citep{dasgupta2004analysis}. Hence for $d>1$ it's natural to focus on \emph{distribution-specific} learning. Namely, we will understand $q_b(\cdot)$ to be the query complexity of active learning the binary class $\H^{2,d}_{\linear}$ under structural assumptions on the target distribution. For example, \cite{balcan2013active} prove that for log-concave distributions, $q_b(\gamma) = \Theta(d \log \frac{1}{\gamma})$. 

\amirg{Here when we talk about $d=1$, it is with bias. Is this how we framed it above. Because initially we presented it as homogeneous.} 
Regarding sparsity, we note that linear classifiers in $d=1$ are maximally spase.
Interestingly, linear classifiers in $d=2$ also admit sparse graphs. By definition, the decision regions of $ h(\cdot; \WW) \in \H^{k,3}_{\linear}$
are convex polyhedrons. By Steinitz's theorem, these are 3-connected planar graphs, and so a corollary of Euler’s Formula implies that $\deg(G)\leq3k-6$. In Section \ref{sec:experiments:synthetic} we experimentally evaluate the sparsity of random linear teachers as a function of $k,d$, observing that the graphs are becoming sparse as $k \gg d$. This is related to a known result in computational geometry \cite{dwyer1991higher}, that the expected number of edges in a Voronoi regions is linear, when points are sampled uniformly from the sphere (see also \cite{aurenhammer2000voronoi}).
We leave open the question of whether the graph learning procedure of Algorithm \ref{algo:learning_G_1d} has an efficient analogue in $d>1$. 

\section{Demonstration of $G^*_D$ misses}

In the main text, we discuss the empirical neighborhood graph $G^*_D$, which is a subset of the true graph $G^*$. In Figure \ref{fig:G_D_example} below we show an example where $G^*_D$ will result in erroneous classification, because it will miss important boundaries points.

\begin{figure}[h]
    \centering
    \includegraphics[width=0.75\linewidth]{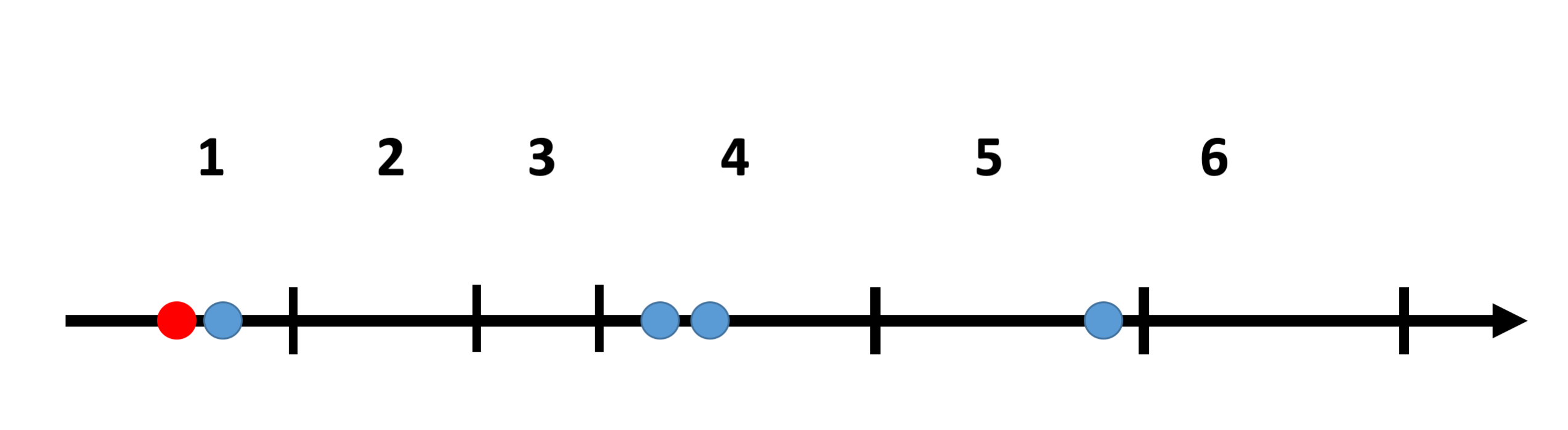}
    \caption{For a linear classifier in $d=1$ and distribution on $\R$ given by the blue circles, the empirical graph $\GS_\D$ only has the edges $(1,2), (3,4)$ and $(5,6)$. Thus the aggregated classifier $f^{(\GS_\D, C^\star)}$ may incorrectly classify the red circle as belonging to class $3$ since it also ``beats'' all its' neighbor classes. Here, the culprit is that $\GS_\D$ misses the edge $(2,3) \in \GS$, as it was not witnessed by $x$ in the support of $\D$. }
    \label{fig:G_D_example}
\end{figure}

\section{Proof of Lemma \ref{lemma:aggregation} }
\label{sec:aggregation_proof}
Fix $\D$ and $\WWS$. We will use an important observation: if $f^{\star}$ is $f^{(G,C)}$ w.r.t  $G=G(\WWS)$ and $C$ given by the \emph{true} binary classifiers (i.e. $h_{ij}=\WWS_{i}-\WWS_{j} = h^\star_{ij}$), then $L_{\D}(f^\star) = 0$. I.e, for every 
 $\xx\in\R^{d}$ with argmax $y=\arg\max_{i\in[k]}\WWS_{i}\xx$, the following holds: $\forall j:\quad 1=f_y^{\star}(\xx)>f_j^{\star}(\xx)$.

Now, fix $(G,C)$ satisfying the conditions in the theorem statement.  We can upper bound $L(f^{(G,C)})$ as follows: 
\begin{align*}
    L(f^{(G,C)})	&=\E_{x\sim\D}[f^{(G,C)}(\xx)\neq\arg\max_{i\in[k]}\WWS_{i}\xx] \\
	&=\Pr_{x\sim\D}[\exists j\in[k]:\,\,f^{(G,C)}(\xx)_{y}<f^{(G,C)}(\xx)_{j}] \\
	&\leq\Pr_{x\sim\D}[\exists(i,j)\in G:\,\sign(h_{ij}(\xx))\neq \sign(h^\star_{ij}(\xx))] \\
	&\leq\sum_{(i,j)\in G}\Pr_{x\sim\D}[\sign(h_{ij}(\xx)) \neq \sign(h^\star_{ij}(\xx)] \\
	&\leq\sum_{(i,j)\in G}\epsilon/\deg(G) \leq \eps
\end{align*} 
As required.


        


\end{document}
\end{document}